\def\eqref#1{equation~\ref{#1}}
\def\1{\bm{1}}
\DeclareMathAlphabet{\mathsfit}{\encodingdefault}{\sfdefault}{m}{sl}
\SetMathAlphabet{\mathsfit}{bold}{\encodingdefault}{\sfdefault}{bx}{n}
\title{Select to Perfect: Imitating desired behavior from large multi-agent data}
\author{Tim Franzmeyer\thanks{frtim@robots.ox.ac.uk \hspace{20pt} $^\dag$equal supervision} \hspace{10pt} Edith Elkind \hspace{10pt} Philip Torr \hspace{10pt} Jakob Foerster$^\dag$ \hspace{10pt} João F. Henriques$^\dag$ \\
\\
\hspace{148pt} University of Oxford
}
\theoremstyle{plain}
\newtheorem{theorem}{Theorem}[section]
\newtheorem{proposition}[theorem]{Proposition}
\newtheorem{example}[theorem]{Example}
\theoremstyle{definition}
\newtheorem{definition}[theorem]{Definition}
\theoremstyle{remark}
\setlist{itemsep=0.1cm,topsep=0pt,parsep=0pt,partopsep=0pt,leftmargin=0.3cm}  %
\newcommand{\dvf}{DVF}
\newcommand{\dvfs}{DVFs}
\newcommand{\evbc}{EV2BC}
\newcommand{\exv}{EV}
\newcommand{\exvs}{EVs}
\newcommand{\Nwoi}{\ensuremath{N\backslash\{i\}}}
\tikzstyle{arrow} = [thick,->,>=stealth]
\tikzstyle{grayarrow} = [thick,->,>=stealth, gray]
\newcommand\mydots{\hbox to 0.5em{.\hss.}}
\pgfplotsset{compat=1.18}
\begin{document}

\maketitle

\begin{abstract}
AI agents are commonly trained with large datasets of demonstrations of human behavior.
However, not all behaviors are equally safe or desirable.
Desired characteristics for an AI agent can be expressed by assigning desirability scores, which we assume are not assigned to individual behaviors but to collective trajectories.
For example, in a dataset of vehicle interactions, these scores might relate to the number of incidents that occurred. 
We first assess the effect of each individual agent's behavior on the collective desirability score, e.g., assessing how likely an agent is to cause incidents.
This allows us to selectively imitate agents with a positive effect, e.g., only imitating agents that are unlikely to cause incidents. 
To enable this, we propose the concept of an agent's \textit{Exchange Value}, which quantifies an individual agent's contribution to the collective desirability score. 
The Exchange Value is the expected change in desirability score when substituting the agent for a randomly selected agent.
We propose additional methods for estimating Exchange Values from real-world datasets, enabling us to learn desired imitation policies that outperform relevant baselines. The project website can be found at {\small \url{https://tinyurl.com/select-to-perfect}.}
\end{abstract}

\section{Introduction}
Imitating human behaviors from large datasets is a promising technique for achieving human-AI and AI-AI interactions in complex environments~\citep{carroll2019utility,meta2022human,he2023learning,shih2022conditional}. 
However, such large datasets can contain undesirable human behaviors, making direct imitation problematic. 
Rather than imitating all behaviors, it may be preferable to ensure that AI agents imitate behaviors that align with predefined desirable characteristics. 
In this work, we assume that desirable characteristics are quantified as desirability scores given for each trajectory in the dataset. 
This is commonly the case when the evaluation of the desirability of individual actions is impractical or too expensive~\citep{stiennon2020learning}.
Assigning desirability scores to collective trajectories may be the only viable option for complex datasets that involve multiple interacting agents. 
For instance, determining individual player contributions in a football match is difficult, while the final score is a readily-available measure of team performance.

We develop an imitation learning method for multi-agent datasets that ensures alignment with desirable characteristics -- expressed through a Desired Value Function\footnote{The \dvf{} itself is not sufficient to describe desired behavior completely, as it possibly only covers a subset of behavior, e.g., safety-relevant aspects. It is complementary to the more complex and nuanced behaviors that are obtained by imitating human demonstrations, providing guardrails or additional guidance.} (\dvf{}) that assigns a score to each \emph{collective} trajectory. 
This scenario is applicable to several areas that involve learning behavior from data of human groups.
One example is a dataset of vehicle interactions, desirability scores indicating the number of incidents in a collective trajectory, and the aim to imitate only behavior that is unlikely to result in incidents (e.g., aiming to imitate driving with foresight). 
Similarly -- given a dataset of social media conversation threads and desirability scores that indicate whether a thread has gone awry -- one may want to only imitate behavior that reduces the chance of conversations going awry~\cite {chang_danescu-niculescu-mizil_2019}.

\begin{figure}[t]
    \centering
        \includegraphics[width=\linewidth]{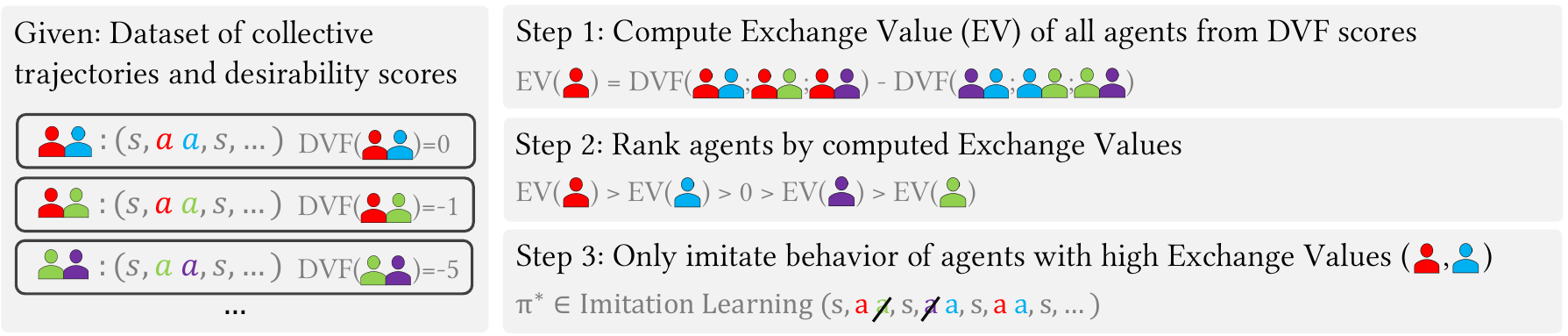}
        \caption{We are given a dataset composed of multi-agent trajectories generated by many individual agents, e.g., a dataset of cars driving in urban environments. In addition, the Desired Value Function (\dvf{}) indicates the desirability score of a collective trajectory, e.g., the number of incidents that occurred. We first compute the Exchange Value (\exv{}) of each agent, where a positive \exv{} indicates that an agent increases the desirability score (e.g. an agent driving safely). We reformulate imitation learning to take into account the computed \exvs{}, and achieve an imitation policy that is aligned with the \dvf{} (e.g. only imitating the behavior of safe drivers).
        }
        \label{fig:teaser}
\end{figure}

\newpage 
Assessing the desirability of an individual agent's behavior involves gauging its impact on the \emph{collective} desirability score. 
For instance, it requires evaluating whether a driver's behavior increases the likelihood of accidents, or whether a user's behavior increases the likelihood of a conversation going awry.
This is termed the \textit{credit assignment problem}~\citep{shapley1953value}, akin to fairly dividing the value produced by a group of players among the players themselves.
The credit assignment problem proves complex in real-world scenarios due to three main factors (see Figure~\ref{fig:groups} for details):
First, many scenarios only permit specific group sizes.This makes Shapley Values~\citep{shapley1953value} -- a concept commonly used in Economics for credit assignment -- inapplicable, as it relies on the comparisons of groups of different sizes (e.g., Shapley Values are not applicable to football players, as football is a game of 11 players and a group of 12 is never observed.)
Second, real-world datasets for large groups are almost always incomplete, i.e., they do not contain trajectories for all (combinatorially many) possible groups of agents.
Third, datasets of human interactions may be {\it fully anonymized} by assigning one-time-use IDs.
In this case, if an agent is present in two trajectories, it will appear in the dataset as if it is \emph{two different agents}, making the credit assignment problem degenerate. This requires incorporating individual behavior information in addition to the information about collective outcomes.

To address these challenges, we propose Exchange Values (\exvs{}), akin to Shapley Values, which quantify an agent's contribution as the expected change in desirability when substituting the agent randomly. 
The \exv{} of an agent can be understood as the expected change in value when substituting the agent with another randomly selected agent -- or as comparing the average value of all groups that include the agent to that of all groups not including the agent (see Step 1 in Figure~\ref{fig:teaser}).
\exvs{} are applicable to scenarios with fixed group sizes, making them more versatile. 
We introduce EV-Clustering that estimates \exvs{} from incomplete datasets by maximizing inter-cluster variance. We show a theoretical connection to clustering by \emph{unobserved} individual contributions and adapt this method to fully-anonymized datasets, by considering low-level behavioral information.

We introduce Exchange Value based Behavior Cloning (\emph{\evbc{}}), which imitates large datasets by only imitating the behavior of selected agents with \exvs{} higher than a tuneable threshold (see Figure~\ref{fig:teaser}).
This approach allows learning from interactions with agents with all behaviors, without necessarily imitating them. This is not possible when simply excluding all trajectories with a low collective desirability score, i.e., selectively imitating based on collective scores instead of individual contributions.
We find that \evbc{} outperforms standard behavior cloning, offline RL, and selective imitation based on collective scores in challenging environments, such as the StarCraft Multi-Agent Challenge~\citep{samvelyan2019starcraft}.
Our work makes the following contributions:

\begin{itemize}
    \item We introduce \emph{Exchange Values} (Def.~\ref{def:exchange_values}) to compute an agent's individual contribution to a collective value function and show their relation to Shapley Values.
    \item We propose \emph{\exv{}-Clustering} (Def.~\ref{def:ev_clustering}) to estimate contributions from incomplete datasets and show a theoretical connection to clustering agents by their unobserved individual contributions.
    \item We empirically demonstrate how \exvs{} can be estimated from fully-anonymized data and employ \emph{\evbc{}} (Def.~\ref{def:evbc}) to learn policies aligned with the \dvf{}, outperforming relevant baselines.
\end{itemize}

\section{Related Work}
Most previous work on aligning AI agents' policies with desired value functions either relies on simple hand-crafted rules~\citep{xu2020recipes,meta2022human}, which do not scale to complex environments, or performs postprocessing of imitation policies with fine-tuning~\citep{stiennon2020learning,ouyang2022training,glaese2022improving,bai2022training}, which requires access to the environment or a simulator. 
In language modeling, \citet{korbak2023pretraining} showed that accounting for the alignment of behavior with the \dvf{} already during imitation learning yields results superior to fine-tuning after-the-fact, however, their approach considers an agent-specific value function. 
In contrast, we consider learning a policy aligned with a collective value function, and from offline data alone.

Credit assignment in multi-agent systems was initially studied in Economics~\citep{shapley1953value}. Subsequently, Shapley Values~\citep{shapley1953value} and related concepts have been applied in multi-agent reinforcement learning to distribute rewards among individual agents during the learning process~\citep{chang2003all, foerster2018counterfactual, nguyen2018credit, wang2020shapley, li2021shapley, wang2022shaq}. Outside of policy learning, \citet{heuillet2022collective} used Shapley Values to analyze agent contributions in multi-agent environments, however
this requires privileged access to a simulator, in order to replace agents with randomly-acting agents.
In contrast to Shapley Values, the applicability of \exvs{} to all group sizes allows us to omit the need to simulate infeasible coalitions.

In contrast to this work, existing work in multi-agent imitation learning typically assumes observations to be generated by optimal agents, as well as simulator access~\citep{le2017coordinated, song2018multi, yu2019multi}. 
Similar to our framework, offline multi-agent reinforcement learning~\citep{jiang2021offline, tseng2022offline,tian2022learning} involves policy learning from multi-agent demonstrations using offline data alone, however, it assumes a dense reward signal to be given, while the \dvf{} assigns a single score per collective trajectory.

In single-agent settings, a large body of work investigates estimating demonstrator expertise to enhance imitation learning~\citep{chen2021learning, zhang2021confidence, cao2021learning, sasaki2021behavioral, beliaev2022imitation, yang2021trail}. However, these methods do not translate to the multi-agent setting due to the challenge of credit assignment.

To the best of our knowledge, no prior work has considered the problem of imitating multi-agent datasets containing mixed behaviors, while ensuring alignment with a collective value function.

\begin{figure}[t]
    \centering
        \includegraphics[width=\linewidth]{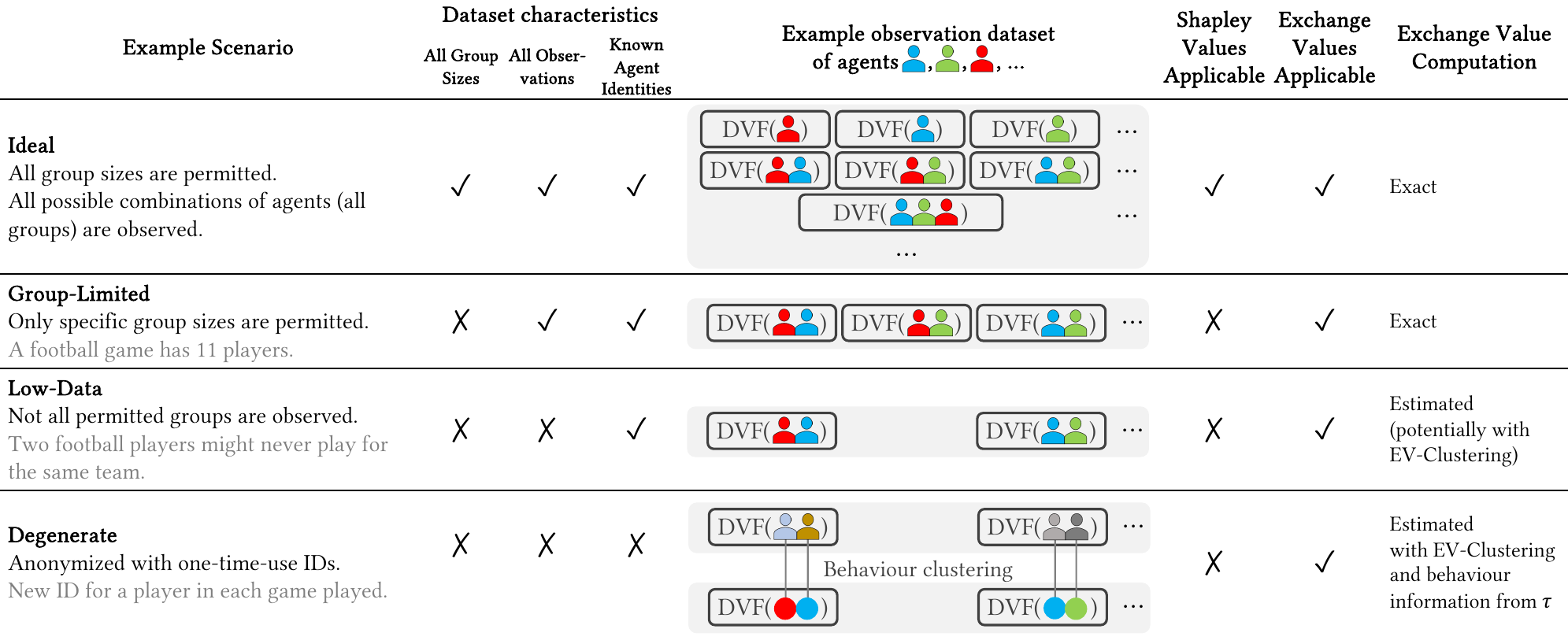}
        \caption{Overview of different characteristics of real-world datasets, and whether Shapley Values and Exchange Values (EVs) are applicable to compute contributions of individual agents to the \dvf{}.} 
        \label{fig:groups}
\end{figure}
\section{Background and Notation}\label{sec:background}

\paragraph{Markov Game.} We consider Markov Games~\citep{Littman1994MarkovLearning}, which generalize Markov Decision Processes (MDPs) to multi-agent scenarios. 
In a Markov Game, agents interact in a common environment. 
At time step $t$, each agent (the $i$th of a total of $m$ agents) takes the action $a_i^t$ and the environment transitions from state $s^t$ to $s^{t+1}$. 
A reduced Markov game (without rewards) is then defined by a state space $\mathcal{S}$ ($s^t\in \mathcal{S}$), a distribution of initial states $\eta$, the action space $\mathcal{A}_i$ ($a_i^t\in \mathcal{A}_i$) of each agent $i$, an environment state transition probability $P(s^{t+1}|s^{t},a_{1},\ldots,a_{m})$ and the episode length $T$. 
We denote this Markov Game as  $\mathcal{M} = (\mathcal{S},\mathcal{A},P,T)$, with collective trajectories $\tau=(s_0,\mathbf{a}_0,\ldots,s_T)$.

\paragraph{Set of multi-agent demonstrations generated by many agents.}
We consider a Markov game $\mathcal{M}$ of $m$ agents and a set of demonstrator agents $N = \{1, ..., n\}$ where $n \geq m$. 
The Markov Game is further assumed to be symmetric (we can change the ordering of players without changing the game). 
The demonstration set $\mathcal{D}$ captures interactions among various groups of agents in $\mathcal{M}$.
Every entry $\mathcal{D}_i = (s_i, \tau_{s_i})$ contains a trajectory $\tau_{s_i}$ for a group of agents $s_i \subseteq N$.
Notably, $\tau_{s_i}$ contains the collective trajectory of all agents in the group $s_i$.

\paragraph{Shapley Values.} We now define the concept of the Shapley Value of an agent~\citep{shapley1953value}, which is commonly used to evaluate the contributions of individual agents to a collective value function in a characteristic function game. Definition~\ref{def:shapley_values} below is somewhat unconventional but can be easily seen to be equivalent to the standard definition.

\begin{definition}[Characteristic function game]\label{def:characteristic_game} A characteristic function game $G$ is given by a pair $(N, v)$, where $N=\{1, \ldots, n\}$ is a finite, non-empty set of agents and $v: 2^N \rightarrow \mathbb{R}$ is a characteristic function, which maps each group (sometimes also referred to as coalition) $C \subseteq N$ to a real number $v(C)$; it is assumed that $v(\emptyset)=0$. The number $v(C)$ is referred to as the value of the group $C$.
\end{definition}

Given a characteristic function game $G=(N, v)$, let $\Pi_{\Nwoi}$ denote the set of all permutations of $\Nwoi$, i.e., one-to-one mappings from $\Nwoi$ to itself. For each permutation $\pi \in \Pi_{\Nwoi}$, we denote by $S_\pi(m)$ the slice of $\pi$ up until and including position $m$; we think of $S_\pi(m)$ as the set of all agents that appear in the first $m$ positions in $\pi$ (note that $S_\pi(0)=\emptyset$).
The marginal contribution of an agent $i$ with respect to a permutation $\pi$ and a slice $m$
in a game $G=(N, v)$ is given by 
\begin{equation*}
\Delta_{m, \pi}^G(i)=v(S_\pi(m)\cup\{i\})-v(S_\pi(m)).
\end{equation*}

This quantity measures the increase in the value of the group when agent $i$ joins them.
We can now define the Shapley Value of an agent $i$: it is simply the agent's average marginal contribution, where the average is taken over all permutations of the set of all other agents $\Nwoi$ and all slices.

\begin{definition}[Shapley Value]\label{def:shapley_values} Given a characteristic function game $G=(N, v)$ with $|N|=n$, the Shapley Value of an agent $i \in N$ is denoted by $SV_i(G)$ and is given by 
\begin{equation}\label{eq:shapley}
SV_i(G)=\nicefrac{1}{n!} \cdot {\textstyle\sum_{m=0}^{n-1} \sum_{\pi \in \Pi_{\Nwoi}}} \Delta_{m, \pi}^G(i).
\end{equation}
\end{definition}
Def.~\ref{def:shapley_values} is important in the context of credit assignment, as a possible solution for distributing collective value to individual agents. 
It also has several consistency properties~\citep{shapley1953value}.

\section{Methods}

\paragraph{Problem setting.}
Given a dataset $\mathcal{D}$ of trajectories generated by groups of interacting agents and a \textit{Desired Value Function} (\dvf{}), the goal of our work is to learn an imitation policy for a single agent that is aligned with the \dvf{}.
We assume that a fraction of the demonstrator agents' behavior is undesirable; specifically, their presence in a group significantly reduces the \dvf{}. 
Further, we assume that the number of demonstrator agents is much larger than the group size.

\paragraph{Overview of the methods section.}
To evaluate agents' contributions in games that only permit specific group sizes, we first define the concept of \exvs{} (Def.~\ref{def:exchange_values}) for regular characteristic function games (Def.~\ref{def:characteristic_game}).
We then show that our definition extends naturally to characteristic function games with constraints on permitted group sizes. 
We finally derive methods to estimate \exvs{} from real-world datasets with limited observations (see Figure~\ref{fig:groups} for an overview).

\subsection{Exchange Values to evaluate agents' individual contributions}
Note that each term of the Shapley Value, denoted $\Delta_{m, \pi}^G(i)$, requires computing the difference in values between two groups of \emph{different} sizes, with and without an agent $i$ (see Def. \ref{def:shapley_values}).
If we wish to only compare groups with the same size, then a natural alternative is to compute the difference in values when the agent at position $m$ is replaced with agent $i$:
\begin{equation}
    \Gamma_{m, \pi}^G(i)=v(S_\pi(m-1) \cup \{i\})-v(S_\pi(m)).
\end{equation}
We call this quantity the \emph{exchange contribution} of $i$, given a permutation of agents $\pi$ sliced at position $m$. It represents the added value of agent $i$ in a group. Importantly it does not require values of groups of different sizes.

We now define the \exv{} analogously to the Shapley Value as the average exchange contribution over all permutations of $\Nwoi$ and all non-empty slices. 

\begin{definition}[Exchange Value]\label{def:exchange_values} Given a characteristic function game $G=(N, v)$ with $|N|=n$, the Exchange Value of an agent $i \in N$ is denoted by $EV_i(G)$ and is given by 
\begin{equation}\label{eq:exchange}
EV_i(G)= ({(n-1)!\cdot(n-1)})^{-1} \cdot {\textstyle \sum_{m=1}^{n-1} \sum_{\pi \in \Pi_{\Nwoi}}} \Gamma_{m, \pi}^G(i).
\end{equation}
\end{definition}

In words, the \exv{} of an agent can hence be understood as the expected change in value when substituting the agent with another randomly selected agent, or as comparing the value of all groups that include the agent to that of all groups that do not include the agent (see Step 1 in Figure~\ref{fig:teaser}).

\paragraph{Relationship between Shapley Value and Exchange Value.}
It can be shown that the Exchange Values of all agents can be derived from their Shapley Values by a simple linear transformation: 
we subtract a fraction of the value of the grand coalition $N$ (group of all agents) and scale the result by $\nicefrac{n}{n-1}$: 
$EV_i(G)= \frac{n}{n-1}(SV_i(G) - \nicefrac{1}{n} \cdot v(N)).$
The proof proceeds by computing the coefficient of each term $v(C)$, $C\subseteq N$,
in summations~(\ref{eq:shapley}) and~(\ref{eq:exchange}) (see Appendix~\ref{app:methods}).
Hence, the Shapley Value and the Exchange Value order the agents in the same way.
Now, recall that the Shapley Value is characterized by four axioms, namely, 
dummy, efficiency, symmetry, and linearity \citep{shapley1953value}. The latter two are also satisfied
by the Exchange Value: if $v(C\cup\{i\})=v(C\cup\{j\})$
for all $C\subseteq N\setminus\{i, j\}$, we have $EV_i(G)=EV_j(G)$ (symmetry), 
and if we have two games $G_1$ and $G_2$ with characteristic functions $v_1$ and $v_2$
over the same set of agents $N$, then for the combined game $G=(N, v)$ 
with the characteristic
function $v$ given by $v(C)=v_1(C)+v_2(C)$
we have $EV_i(G)=EV_i(G_1)+EV_i(G_2)$ (linearity). 
The efficiency property of the Shapley Value, i.e., 
$\sum_{i\in N}SV_i(G)=v(N)$ implies that $\sum_{i\in N}EV_i(G)=0$. In words, the sum of all agents' \exv{} is zero. 
The dummy axiom, too, needs to be modified:
if an agent $i$ is a dummy, i.e., $v(C\cup\{i\})=v(C)$
for every $C\subseteq N\setminus\{i\}$ then for the Shapley value we have $SV_i(G)=0$
and hence $EV_i(G)=- \nicefrac{1}{n-1} \cdot v(N)$, 
In each case, the proof follows
from the relationship between the Shapley Value and the Exchange Value and the fact that the Shapley Value satisfies these axioms (see Appendix~\ref{app:methods}).

\subsubsection{Computing Exchange Values if only certain group sizes are permitted}
For a characteristic function game $\mathcal{G}=(N,v)$ the value function $v$ can be evaluated for each possible group $C \subseteq N$. We now consider the case where the value function $v$ is only defined for groups of certain sizes $m \in M$, i.e. $v$ is only defined for a subset of groups of certain sizes. 

\begin{definition}[Constrained characteristic function game]\label{def:constrained_characteristic_game} A constrained characteristic function game $\bar{G}$ is given by a tuple $(N, v, M)$, where $N=\{1, \ldots, n\}$ is a finite, non-empty set of agents, $M\subseteq\{0, \dots, n-1\}$ is a set of feasible group sizes and $v: \{C\in 2^N: |C|\in M\} \rightarrow \mathbb{R}$ is a characteristic function, which maps each group $C \subseteq N$ of size $|C| \in M$ to a real number $v(C)$.
\end{definition}

Note that both the Shapley Value and the \exv{} are generally undefined for constrained characteristic function games, as the value function is not defined for groups $C$ of size $|C| \notin M$. 
The definition of the Shapley Value cannot easily be adapted to constrained characteristic function games, as its computation requires evaluating values of groups of different sizes. 
In contrast, the definition of the \exv{} can be straightforwardly adapted to constrained characteristic function games by limiting the summation to slices of size $m \in M^+$, where $M^+=\{m\in M: m>0\}$. Hence, we define the Constrained \exv{} as the average exchange contribution over all permutations of $\Nwoi$ and over all slices of size $m \in M^+$.

\begin{definition}[Constrained Exchange Value]\label{def:constrained_exchange_values} Given a constrained characteristic function game $\bar{G}=(N, v, M)$ with $|N|=n$, the Constrained Exchange Value of an agent $i \in N$ is denoted by $EV_i(\bar{G})$ and is given by
$EV_i(\bar{G})=((n-1)! \cdot |M^+|)^{-1} \cdot { \textstyle \sum_{m \in M^+} \sum_{\pi \in \Pi_{\Nwoi}}} \Gamma_{m, \pi}^{\bar{G}}(i).$
\end{definition}

We refer to the Constrained \exv{} and \exv{} interchangeably, as they are applicable to different settings. If some groups are not observed, we can achieve an unbiased estimate of the \exv{} by sampling groups uniformly at random. The expected EV is
$EV_i(\bar{G}) = \mathbb{E}_{m \sim U(M^+),\pi \sim U(\Pi_{\Nwoi})}\big[\Gamma_{m, \pi}^{\bar{G}}(i)\big].$
This expectation converges to the true EV in the limit of infinite samples.

As outlined in Step 1 in Figure~\ref{fig:teaser}, the \exv{} of an agent is a comparison of the value of a group that includes the agent and a group that does not include the agent, considering all permitted group sizes.

\subsection{Estimating Exchange Values from limited data}\label{sec:methods:est_by_sampling}
The \exv{} assesses the contribution of an individual agent and is applicable under group size limitations in real-world scenarios (see \texttt{Group-Limited} in Figure~\ref{fig:groups}). 
However, exactly calculating EVs is almost always impossible as real-world datasets likely do not contain observations for all (combinatorially many) possible groups (\texttt{Low-Data} in Figure~\ref{fig:groups}). 
We first show a sampling-based estimate (Section~\ref{sec:methods:est_by_sampling}) of EVs, which may have a high variance for EVs of agents that are part of only a few trajectories (outcomes).
Next, we introduce a novel method, EV-Clustering (Section~\ref{sec:methods:est_from_single}), which clusters and can be used to reduce the variance. 
When datasets are anonymized with one-time-use IDs, each demonstrator is only observed as part of one group (see \texttt{Degenerate} in Figure~\ref{fig:groups}), rendering credit assignment degenerate, as explained in Section~\ref{sec:methods:est_from_single}. We address this by incorporating low-level behavior data from the trajectories $\tau$.

\subsubsection{EV-Clustering identifies similar agents}\label{sec:methods:est_from_single}
In the case of very few agent observations, the above-introduced sampling estimate has a high variance.
One way to reduce the variance is by clustering: if we knew that some agents tend to contribute similarly to the \dvf{}, then clustering them and estimating one EV per cluster (instead of one EV per agent) will use more samples and thereby reduce the variance.
Note that, as our focus is on accurately estimating \exvs{}, we do not consider clustering agents by behavior here, as two agents may exhibit distinct behaviors while still contributing equally to the \dvf{}.

We propose \emph{EV-Clustering}, which clusters agents such that the variance in EVs across all agents is maximized.
In Appendix~\ref{app:methods} we show that \emph{EV-Clustering} is equivalent to clustering agents by their {\it unobserved} individual contribution, under the approximation that the total value of a group is the sum of the participating agents' individual contributions, an assumption frequently made for theoretical analysis~\citep{lundberg2017unified,covert2021improving}, as it represents the simplest non-trivial class of cooperative games.
Intuitively, if we choose clusters that maximize the variance in EVs across all agents, all clusters' EVs will be maximally distinct.
An example of poor clustering is a random partition, which will have very similar EVs across clusters (having low variance).

Specifically, we assign $n$ agents to $k \leq n$ clusters $K= \{ 1, \dots, k-1 \}$, with individual cluster assignments $\mathbf{u} = \{ u_0, ... , u_{n-1} \}$, where $u_i \in K$. 
We first combine the observations of all agents within the same cluster by defining a clustered value function $\tilde{v}(C)$ that assigns a value to a group of cluster-centroid agents $C \subseteq K$ by averaging over the combined observations, as $\tilde{v}(C)=\nicefrac{1}{\eta} \cdot {\textstyle \sum_{m=0}^{n-1} \sum_{\pi \in \Pi_{N}}} v(S_\pi(m)) \cdot \mathbbm{1} (\{u_j \ | \ j \in S_\pi(m)\} =  C)$,
where $\eta$ is a normalization constant. The EV of an agent $i$ is then given as $EV_i(\tilde{G})$, with $\tilde{G}=(K,\tilde{v})$, thereby assigning equal EVs to all agents within one cluster.

\begin{definition}[EV-Clustering]\label{def:ev_clustering}
We define the optimal cluster assignments $\mathbf{u}^*$ such that the variance in EVs across all agents is maximized: 
\begin{equation}\label{eq:ev_clustering}
    \mathbf{u}^* \in {\arg \max}_\mathbf{u} \textrm{Var}([EV_0(\tilde{G}),\dots,EV_{n-1}(\tilde{G})]).
\end{equation}    
\end{definition}
We show in Appendix~\ref{app:clustering_ablation} that this objective is equivalent to clustering agents by their unobserved individual contributions, under the approximation of an additive value function.

\subsubsection{Degeneracy of the credit assignment problem for fully-anonymized data}\label{sec:meth:degeneracy}
If two agents are observed only once in the dataset and as part of the same group, equal credit must be assigned to both due to the inability to separate their contributions. 
Analogously, when all agents are only observed once, credit can only be assigned to groups, resulting in the degenerate scenario that all agents in a group are assigned the same credit (e.g. are assigned equal EV). 
We solve this by combining low-level behavior information from trajectories $\tau$ with EV-Clustering (see Sec.~\ref{sec:exp:missing_data}).

\subsection{Exchange Value based Behavior Cloning (\evbc{})}
Having defined the \exv{} of an individual agent and different methods to estimate it, we now define a variation of Behavior Cloning~\citep{Pomerleau1991EfficientNavigation}, which takes into account each agent's contribution to the desirability value function (\dvf{}). 
We refer to this method as \emph{\evbc{}}.
Essentially, \evbc{} imitates only actions of selected agents that have an \exv{} larger than a tunable threshold parameter.

\begin{definition}[\exv{} based Behavior Cloning (\evbc{})]\label{def:evbc}
For a set of demonstrator agents $N$, a dataset $\mathcal{D}$, and a \dvf{}, we define the imitation learning loss for \evbc{} as
\begin{equation}
L_{\evbc{}}(\theta) =  - { \textstyle \sum_{n \in \mathcal{N}} \sum_{(s_i, a^n_i) \in \mathcal{D}} \log(\pi^\theta(a^n_i|s_i)) \ \cdot \mathbbm{1}(EV_n^{\dvf{}}>c) }
\end{equation}
where $EV_n^{\dvf{}}$denotes the EV of agent $n$ and where $c$ is a tunable threshold parameter that trades off between including data of agents with higher contributions to the \dvf{} and reducing the total amount of training data used. 
\end{definition}

\begin{figure}[t]
    \centering
    \begin{minipage}[c]{0.43\linewidth}
        \centering
        \captionof{table}{Resulting performance with respect to the DVF for different imitation learning methods in different Starcraft scenarios.}
        \label{tab:starcraft_results}
        \resizebox{\textwidth}{!}{
            \begin{tabular}{lccc}
            \midrule[1pt]
            Method & 2s3z & 3s\_vs\_5z & 6h\_vs\_8z \\
            \midrule
            BC       & 12.14 $\pm$ 1.8           & 13.10 $\pm$ 2.0           & 8.56  $\pm$ 0.6  \\
            Group-BC                                    &  15.41 $\pm$ 2.4          & 16.63 $\pm$ 1.9           &  9.10 $\pm$ 0.9 \\
            \evbc{} (Ours)                              & \textbf{17.38} $\pm$ 1.6  & \textbf{20.31} $\pm$ 2.4  &  \textbf{10.0} $\pm$ 0.91  \\
            \bottomrule[1pt]
            \end{tabular}
        }
        \label{tab:results}
    \end{minipage}
    \hfill
    \begin{minipage}[c]{0.53\linewidth}
        \centering
        \resizebox{0.9\textwidth}{!}{\includegraphics{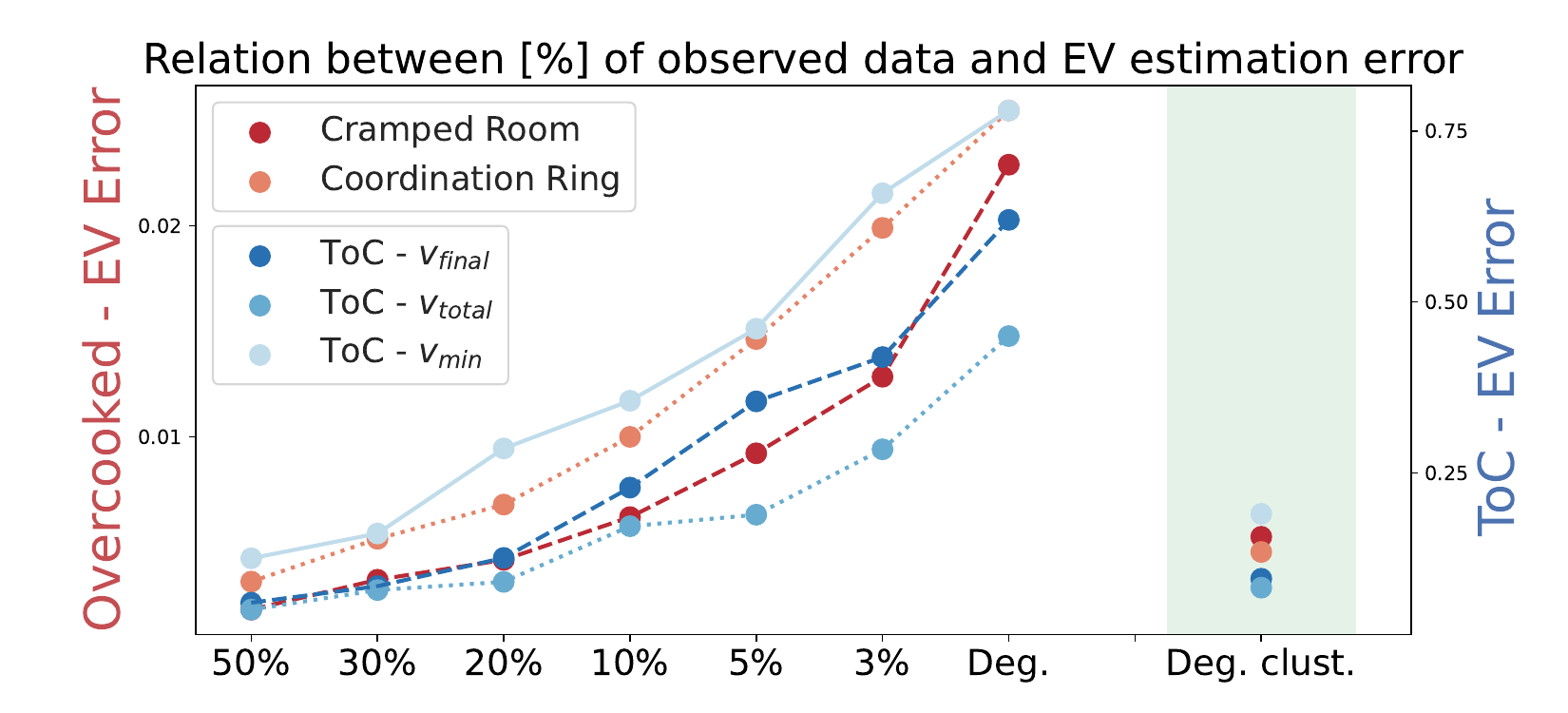}} %
        \caption{Mean error in estimating EVs with decreasing number of observations. `Deg.' refers to the fully anonymized degenerate case. Error decreases significantly if agents are clustered (green-shaded area).}
        \label{fig:error_plot}
    \end{minipage}
    \vspace{-10pt}
\end{figure}

\section{Experiments}
The environments that we consider only permit certain group sizes, hence we use constrained \exvs{} (see Def.~\ref{def:constrained_exchange_values}).
We run all experiments for five random seeds and report mean and standard deviation where applicable. 
For more details on the implementation, please refer to the Appendix.
In the following experiments, we first evaluate EVs as a measure of an agent's contribution to a given \dvf{}.
We then assess the average estimation error for EVs as the number of observations in the dataset $D$ decreases and how applying clustering decreases this error.
We lastly evaluate the performance of Exchange Value based Behaviour Cloning (\evbc{}, see Definition~\ref{def:evbc}) for simulated and human datasets and compare to relevant baselines, such as standard Behavior Cloning~\citep{Pomerleau1991EfficientNavigation} and Offline Reinforcement Learning~\citep{pan2022plan}.

In \textbf{Tragedy of the Commons}~\citep{hardin1968tragedy} (ToC) multiple individuals deplete a shared resource. It is a social dilemma scenario often studied to model the overexploitation of common resources~\citep{dietz2003struggle,ostrom2009general}. 
We model ToC as a multi-agent environment and consider three \dvfs{} representing different measures of social welfare: the final pool size $v_{\text{final}}$, the total resources consumed $v_{\text{total}}$, and the minimum consumption among agents $v_{\text{min}}$.

\textbf{Overcooked}~\citep{carroll2019utility} is a two-player game simulating a cooperative cooking task requiring coordination and is
a common testbed in multi-agent research. 
Within Overcooked, we consider the configurations Cramped Room and Coordination Ring (displayed in Figure~\ref{fig:cramped_room}).
For each environment configuration, we generate two datasets by simulating agent behaviors using a near-optimal planning algorithm, where we use a parameter $\lambda$ to determine an agent's behavior. 
For $\lambda=1$ agents act (near)-optimal, for $\lambda=-1$ agents act adversarially. 
We refer to $\lambda$ as the agent's trait, as it acts as a proxy for the agent's individual contribution to the collective value function.
Each demonstration dataset $D$ is generated by $n=100$ agents, and trajectories $\tau$ are of length $400$. 
The adversarial dataset $D^\mathrm{adv}$ is comprised of 25\% adversarial agents with $\lambda=-1$ and 75\% (near)-optimal agents with $\lambda=1$, while for the dataset  $D^\lambda$ agents were uniformly sampled between $\lambda=-1$ and $\lambda=1$. 
The $D^\text{human}$ dataset was collected from humans playing the game (see~\citet{carroll2019utility}); it is fully anonymized with one-time-use agent identifiers, hence is a degenerate dataset (see Figure~\ref{fig:groups} bottom row). 
We consider the standard value function given for Overcooked as the \dvf{}, i.e. the number of soups prepared by both agents over the course of a trajectory. 

The \textbf{StarCraft Multi-Agent Challenge}~\citep{samvelyan2019starcraft} is a cooperative multi-agent environment that is partially observable, involves long-term planning, requires strong coordination, and is heterogeneous. We consider the settings \verb|2s3z|, \verb|3s_vs_5z| and \verb|6h_vs_8z|, which involve teams of 3-6 agents. For each, we generate a pool of 200 agents with varying capabilities by extracting policies at different epochs, and from training with different seeds. We generate a dataset that contains simulated trajectories of 100 randomly sampled groups (out of $~10^9$ possible groups) and use the environment's ground truth reward function to assign DVF scores according to the collective performance of agents.

\paragraph{Exchange Values assess an agent's individual contribution to a collective value function.}
To analyze \exvs{} as a measure for an agent's individual contribution to a \dvf{}, we consider full datasets that contain demonstrations of all possible groups, which allows us to estimate EVs accurately.
In ToC, we find that the ordering of agents broadly reflects our intuition: Taking more resources negatively impacts the EVs, and agents consuming the average of others have less extreme EVs. 
The color-coded ordering of agents under different \dvfs{} in shown in Figure~\ref{fig:toc_colormap} in App.~\ref{app:toc}.
In Overcooked, we consider the two simulated datasets ($D^{adv}$ and $D^\lambda$) but not the human dataset, as the individual contribution is unknown for human participants. 
We find that EVs of individual agents are strongly correlated with their trait parameter $\lambda$, which is a proxy for the agent's individual contribution, and provide a plot that shows the relationship between $\lambda$ and EV in Figue~\ref{fig:lambda_corr} in App.~\ref{app:overcooked}. 

\subsection{Estimating EVs from incomplete data}\label{sec:exp:missing_data}
\paragraph{Estimation error for different dataset sizes.}\label{sec:exp:reduced}
We now turn to realistic settings with missing data, where EVs must be estimated (Sec.~\ref{sec:methods:est_by_sampling}).
For both ToC and Overcooked, we compute the mean estimation error in EVs if only a fraction of the possible groups is contained in the dataset.
As expected, we observe in Fig.~\ref{fig:error_plot} that the mean estimation error increases as the fraction of observed groups decreases, with the largest estimation error for fully anonymized datasets (see Fig.~\ref{fig:error_plot} -- {\it Deg.}).

\paragraph{Estimating EVs from degenerate datasets with EV-Clustering.}\label{sec:exp:reduced}
To estimate EVs from degenerate datasets, we first obtain behavior embeddings from the low-level behavior information given in the trajectories $\tau$ in $D$.
Specifically, in Overcooked and ToC, we concatenate action frequencies in frequently observed states. In Starcraft, we use TF-IDF~\citep{sparck1972statistical} to obtain behavior embeddings.
We then compute a large number of possible cluster assignments for the behavior embeddings using different methods and hyperparameters. 
In accordance with the objective of EV-Clustering, we choose the cluster assignment with the highest variance in EVs. 
We observe in Figure~\ref{fig:error_plot} that clustering significantly decreases the estimation error (see Deg. clustered). 

\subsection{Imitating desired behavior by utilizing \exvs{}}\label{sec:exp:imitation}
We now evaluate \evbc{} in all domains. In accordance with the quantity of available data, we set the threshold parameter such that only agents with EVs above the 90th, 67th, and 50th percentile are imitated in ToC, Starcraft, and Overcooked, respectively.
We replicate the single-agent \evbc{} policy for all agents in the environment and evaluate the achieved collective \dvf{} score.
As baselines, we consider (1) \emph{BC}, where Behavior Cloning~\citep{Pomerleau1991EfficientNavigation} is done with the full dataset without correcting for EVs, (2) offline multi-agent reinforcement \emph{OMAR}~\citep{pan2022plan} with the reward at the last timestep set to the \dvf{}'s score for a given trajectory (no per-step reward is given by the 
\dvf{}) and (3) \emph{Group BC}, for which only collective trajectories with a \dvf{} score larger than the relevant percentile are included. 
While \evbc{} is based on individual agents' contributions, this last baseline selectively imitates data based on group outcomes. 
For instance, if a collective trajectory includes two aligned agents and one unaligned agent, the latter baseline is likely to imitate all three agents. In contrast, our approach would only imitate the two aligned agents.

\paragraph{ToC results.} 
We imitate datasets of 12 agents and 120 agents, with group sizes of 3 and 10, respectively, evaluating performance for each of the three \dvfs{} defined for the ToC environment. We do not consider the OMAR baseline as policies are not learned but rule-based.
Table~\ref{tab:results} demonstrates that \evbc{} outperforms the baselines by a large margin.

\paragraph{Overcooked results.}
We now consider all datasets $D^\textrm{adv}$, $D^{\lambda}$ and $D^{\text{human}}$ in both Overcooked environments. 
We evaluate the performance achieved by agents with respect to the \dvf{} (the environment value function of maximizing the number of soups) when trained with different imitation learning approaches on the different datasets. 
EVs are computed as detailed in Section~\ref{sec:exp:missing_data}.
Table~\ref{tab:results} shows that \evbc{} clearly outperforms the baseline approaches.
We further note that \evbc{} significantly outperforms baseline approaches on the datasets of human-generated behavior, for which EVs were estimated from a fully-anonymized real-world dataset.
This demonstrates that BC on datasets containing unaligned behavior carries the risk of learning wrong behavior, but it can be alleviated by weighting the samples using estimated EVs.

\paragraph{Starcraft Results.}
We observe in Table~\ref{tab:starcraft_results} that EV2BC outperforms the baselines by a substantial margin, underlining the applicability of our method to larger and more complex settings. We omitted the OMAR baseline, which is implemented as offline MARL with the DVF as the final-timestep reward, as it performed significantly worse than BC.

\begin{table}
  \vspace{-5pt}
  \centering
  \caption{Resulting performance with respect to the \dvf{} for different imitation learning methods in the Overcooked environments Cramped Room (top) and Coordination Ring (bottom). In Tragedy of Commons: 12 agents experiment at the top, 120 agents experiment at the bottom.}
  \resizebox{1\linewidth}{!}{
    \begin{tabular}{l|ccc|ccc}
      & \multicolumn{3}{c}{Overcooked} & \multicolumn{3}{c}{Tragedy of Commons} \\
      \cmidrule[1pt](lr){2-4} \cmidrule[1pt](lr){5-7}
      Imitation method & $\mathcal{D}^\lambda$ & $\mathcal{D}^{adv}$ &$\mathcal{D}^\text{human}$ & \multicolumn{1}{c}{$v_{final}$} & \multicolumn{1}{c}{$v_{total}$} & \multicolumn{1}{c}{$v_{min}$}\\ 
      \midrule[1pt]
      BC~\citep{Pomerleau1991EfficientNavigation} 
      & 10.8 $\pm$ 2.14 & 40.8 $\pm$ 12.7 & 153.34 $\pm$ 11.5 & 2693.6 $\pm$ 139.1 & 50.6 $\pm$ & 2.4 $\pm$ 0.45\\
      Group-BC                  
      & 54.2 $\pm$ 5.45 & 64.8 $\pm$ 7.62 & 163.34 $\pm$ 6.08 & 5324.2 $\pm$ 210.8 & 100.01 $\pm$ 20.08 & 4.60 $\pm$ 1.01   \\
      OMAR~\citep{pan2022plan}
      & 6.4 $\pm$ 3.2 & 25.6 $\pm$ 8.9 & 12.5 $\pm$ 4.5 & -  & -  & - \\
      \evbc{} (ours)                          
      & \textbf{91.6} $\pm$ 12.07 & \textbf{104.2} $\pm$ 10.28 & \textbf{170.89} $\pm$ 6.8 &  \textbf{10576.8} $\pm$ 307.4 & \textbf{342.8} $\pm$ 49.36 & \textbf{44.2} $\pm$ $6.4$\\
      \midrule[1pt]
      BC~\citep{Pomerleau1991EfficientNavigation} 
      & 15.43 $\pm$ 4.48 & 10.4 $\pm$ 6.8 & 104.89 $\pm$ 12.44 & 2028.8 $\pm$ 60.9 & 38.9 $\pm$ 10.4 & 1.8 $\pm$ 0.4 \\
      Group-BC                  
      & 24 $\pm$ 4.69 & \textbf{14.6} $\pm$ 2.48 & 102.2 $\pm$ 6.19 & 3400.5 $\pm$ 100.9 & 77.1 $\pm$ 14.1 & 3.51 $\pm$ 1.6 \\
      OMAR~\citep{pan2022plan}
      & 12.43 $\pm$ 3.35 & 9.5 $\pm$ 3.5 & 12.4 $\pm$ 6.0 & - & - & - \\
      \evbc{} (ours)                          
      & \textbf{30.2} $\pm$ 6.91 & 12.4 $\pm$ 2.65 & \textbf{114.89} $\pm$ 5.08 & \textbf{8123.4} $\pm$ 600.8  & \textbf{270.0} $\pm$ 50.0  & \textbf{33.1} $\pm$ 7.1 \\
      \bottomrule[1pt]
    \end{tabular}
  }
  \vspace{-10pt}
  \label{tab:results_modified}
\end{table}

\section{Conclusion}
Our work presents a method for training AI agents from diverse datasets of human interactions while ensuring that the resulting policy is aligned with a given desirability value function. 
However, it must be noted that quantifying this value function is an active research area.
Shapley Values and Exchange Values estimate the alignment of an individual with a group value function (which must be prescribed separately) and, as such, can be misused if they are included in a larger system that is used to judge those individuals in any way. Discrimination of individuals based on protected attributes is generally unlawful, and care must be taken to avoid any discrimination by automated means. We demonstrated a novel positive use of these methods by using them to train aligned (beneficial) agents, that do not imitate negative behaviors in a dataset. We expect that the benefits of addressing the problem of unsafe behavior by AI agents outweigh the downsides of misuse of Shapley Values and Exchange Values, which are covered by existing laws.

Future work may address the assumption that individual agents behave similarly across multiple trajectories and develop methods for a more fine-grained assessment of desired behavior. 
Additionally, exploring how our framework can more effectively utilize data on undesired behavior is an interesting avenue for further investigation, e.g., developing policies that are constrained to not taking undesirable actions.
Lastly, future work may investigate applications to real-world domains, such as multi-agent autonomy scenarios. 

\paragraph{Reproducibility.}
To help reproduce our work, we publish code on the project website at {\small \url{https://tinyurl.com/select-to-perfect}.}
We provide detailed overviews for all steps of the experimental evaluation in the Appendix, where we also link to the publicly available code repositories that our work used. 
We further provide information about computational complexity at the end of the Appendix.
\small
\bibliography{references}
\bibliographystyle{iclr2024_conference}
\appendix

\newpage

\section{Appendix to methods}\label{app:methods}
\subsection{Axiomatic properties of the Exchange Value and its relationship 
with the Shapley Value}
Fix a characteristic function game $G$ with a set of players $N$.
It is well-known that the Shapley Value satisfies the following axioms~\citep{shapley1953value}:
\begin{itemize}
    \item[(1)] Dummy: if an agent $i$ satisfies $v(C\cup\{i\})=v(C)$ for all $C\subseteq N\setminus\{i\}$ then $SV_i(G)=0$;
    \item[(2)] Efficiency: the sum of all agents' Shapley Values equals to the value of the grand coalition, i.e., $\sum_{i\in N} SV_i(G) = v(N)$;
    \item[(3)] Symmetry: for every pair of distinct agents $i, j\in N$ with  $v(C\cup\{i\})=v(C\cup\{j\})$ for all $C\subseteq N\setminus\{i, j\}$ we have
    $SV_i(G)=SV_j(G)$;
    \item[(4)] Linearity: for any pair of games $G_1=(N, v_1)$ and $G_2=(N, v_2)$ with the same set of agents $N$, the game $G=(N, v)$ whose characteristic funciton $v$ is given by $v(C)=v_1(C)+v_2(C)$ for all $C\subseteq N$ satisfies $SV_i(G)=SV_i(G_1)+SV_i(G_2)$ for all $i\in N$. 
\end{itemize}
Indeed, the Shapley Value is the only value for characteristic function games that satisfies these axioms~\citep{shapley1953value}. It is then natural to ask which of these axioms (or their variants) are satisfied by the Exchange Value.

To answer this question, we first establish a relationship between the Shapley Value and the Exchange Value.

\begin{proposition}\label{prop:sv-ev}
    For any characteristic function game $G=(N, v)$ and every agent $i\in N$ 
    we have
    \begin{align}\label{eq:sv-ev}
        EV_i(G) = \frac{n}{n-1}\left(SV_i(G)-\frac{1}{n} \cdot v(N)\right).
    \end{align}
\end{proposition}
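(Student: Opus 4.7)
The plan is to prove the identity by computing the sum defining $EV_i(G)$ directly, using a telescoping trick that exposes the Shapley summands plus a boundary term involving $v(N)$. The key observation is that, for each permutation $\pi \in \Pi_{\Nwoi}$, one can rewrite the exchange contribution as
\begin{equation*}
\Gamma_{m,\pi}^G(i) = \Delta_{m,\pi}^G(i) - \bigl[v(S_\pi(m)\cup\{i\}) - v(S_\pi(m-1)\cup\{i\})\bigr],
\end{equation*}
simply by adding and subtracting $v(S_\pi(m)\cup\{i\})$. The second bracket is the increment, as $m$ grows, of the sequence $v(S_\pi(m)\cup\{i\})$, so summing it over $m=1,\dots,n-1$ collapses telescopically to $v(S_\pi(n-1)\cup\{i\}) - v(S_\pi(0)\cup\{i\}) = v(N) - v(\{i\})$. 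This is the step to identify first; once it is in place, everything else is bookkeeping.

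Second, I would relate the remaining sum $\sum_{m=1}^{n-1}\Delta_{m,\pi}^G(i)$ to the Shapley sum that runs from $m=0$. Since $\Delta_{0,\pi}^G(i) = v(\{i\})$, the missing $m=0$ term cancels the $v(\{i\})$ coming from the telescoping. Putting these together for a fixed $\pi$, I obtain
\begin{equation*}
\sum_{m=1}^{n-1}\Gamma_{m,\pi}^G(i) \;=\; \sum_{m=0}^{n-1}\Delta_{m,\pi}^G(i) \;-\; v(N).
\end{equation*}

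Third, I would sum over all $\pi\in\Pi_{\Nwoi}$ (there are $(n-1)!$ of them), divide by the normalizing constant $(n-1)!\cdot(n-1)$ coming from Definition~\ref{def:exchange_values}, and recognize the Shapley sum via Definition~\ref{def:shapley_values}. Using $\sum_\pi\sum_m \Delta_{m,\pi}^G(i) = n!\cdot SV_i(G)$ and the ratio $n!/[(n-1)!(n-1)] = n/(n-1)$, the identity $EV_i(G) = \tfrac{n}{n-1}\bigl(SV_i(G) - \tfrac{1}{n}v(N)\bigr)$ falls out.

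The only real obstacle is choosing the right rewriting of $\Gamma_{m,\pi}^G(i)$: the naive decomposition that inserts $v(S_\pi(m-1))$ gives a telescoping sum in $v(S_\pi(m))$ that ends at $v(N\setminus\{i\})$ instead of $v(N)$, which does not match the claimed formula. The correct move is to insert $v(S_\pi(m)\cup\{i\})$, so the telescoping series tracks coalitions that contain $i$ and terminates at the grand coalition. Once this is recognized, the rest of the derivation is routine algebra and needs no coefficient-of-$v(C)$ bookkeeping.
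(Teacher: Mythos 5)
Your proof is correct, and it takes a genuinely different route from the paper's. The paper argues coalition-by-coalition: it computes, for every $C\subseteq N$, the coefficient with which $v(C)$ appears in the sums defining $SV_i(G)$ and $EV_i(G)$, observes that every weight for $C\subsetneq N$ gets scaled by exactly $\nicefrac{n}{n-1}$, and treats $C=N\setminus\{i\}$ and the grand coalition as a special case (in the EV sum $v(N)$ never appears), which is precisely what forces the $-\nicefrac{1}{n}\cdot v(N)$ correction. You instead prove a per-permutation identity: writing $\Gamma_{m,\pi}^G(i)=\Delta_{m,\pi}^G(i)-\bigl[v(S_\pi(m)\cup\{i\})-v(S_\pi(m-1)\cup\{i\})\bigr]$, telescoping the bracket to $v(N)-v(\{i\})$, and noting $\Delta_{0,\pi}^G(i)=v(\{i\})$ (here you implicitly use $v(\emptyset)=0$ from Definition~\ref{def:characteristic_game}, which is fine), you get $\sum_{m=1}^{n-1}\Gamma_{m,\pi}^G(i)=\sum_{m=0}^{n-1}\Delta_{m,\pi}^G(i)-v(N)$ for each fixed $\pi\in\Pi_{\Nwoi}$, and averaging over the $(n-1)!$ permutations with the normalization $((n-1)!\,(n-1))^{-1}$ yields the claimed identity via $n!/[(n-1)!(n-1)]=n/(n-1)$. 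Your telescoping route avoids all combinatorial coefficient counting and handles the boundary coalition automatically, so it is arguably the more elementary derivation; the paper's coefficient computation, in exchange, makes explicit how each $v(C)$ is weighted in the two values, which is a more informative picture of the linear relationship between the two solution concepts. Your closing remark about the wrong telescoping choice (inserting $v(S_\pi(m-1))$ rather than $v(S_\pi(m)\cup\{i\})$) is also accurate: that variant terminates at $v(N\setminus\{i\})$, which is not a single fixed quantity across agents and does not produce the stated formula.
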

\begin{proof}
    Fix an agent $i$ and consider an arbitrary non-empty coalition $C\subsetneq N\setminus\{i\}$. 
    
    In the expression for the Shapley Value of $i$ 
    the coefficient of $v(C)$ is 
    $$
    -\frac{1}{n!}(|C|)!(n-1-|C|)! :
    $$ 
    we subtract the fraction of permutations of $N$ where the agents in $C$ appear in the first $|C|$ positions, followed by $i$. By the same argument,  
    the coefficient of $v(C\cup\{i\})$ is 
    $$
    \frac{1}{n!}(|C|)!(n-1-|C|)!.
    $$

    Similarly, in the expression for the Exchange Value of $i$
    the coefficient of $v(C)$ is 
    $$
    -\frac{1}{(n-1)!(n-1)}(|C|)!(n-1-|C|)! :
    $$ 
    each permutation of $N\setminus\{i\}$ where agents in $C$ appear in the first $|C|$ positions contributes with coefficient $-\frac{1}{(n-1)!(n-1)}$.
    By the same argument, 
    the coefficient of $v(C\cup\{i\})$ is 
    $$
    \frac{1}{(n-1)!(n-1)}(|C|)!(n-1-|C|)!
    $$

    Now, if $C=N\setminus\{i\}$, in the expression for $SV_i(G)$
    the coefficient of $v(C)$ is $-\frac{1}{n}$ and the coefficient of $v(C\cup\{i\})=v(N)$ is $\frac{1}{n}$. In contrast, in the expression for $EV_i(G)$ the coefficient of $v(C)$ is $-\frac{1}{n-1}$: 
    for each of the $(n-1)!$ permutations of $N\setminus\{i\}$ we subtract $v(C)$ with coefficient $\frac{1}{(n-1)!(n-1)}$ when we replace the last agent in that permutation by $i$. On the other hand, $v(N)$ never appears.

    It follows that, for every coalition $C\subsetneq N$, if the value $v(C)$ appears
    in the expression for $SH_i(G)$ with weight $\omega$ then it appears in the 
    expression for $EV_i(G)$ with weight $\frac{n}{n-1}\cdot\omega$.
    Hence 
    $$
    EV_i(G) = \frac{n}{n-1}\left(SH_i(G)-\frac{1}{n}\cdot v(N)\right)
    $$
\end{proof}

\begin{example}
Consider a characteristic function game $G=(N, v)$, where $N=\{1, 2\}$
and $v$ is given by $v(\{1\}) =2$, $v(\{2\})=4$, $v(\{1, 2\}) = 10$.
We have 
$$
SH_1(G) = (2+(10-4))/2 = 4,\qquad SH_2(G) = (4+(10-2))/2 = 6 
$$
and
$$
EV_1(G) = 2-4 = -2,\qquad EV_2(G) = 4-2 = 2.
$$
Note that $EV_1(G) = 2(SH_1(G)-\frac12 v(N))$, $EV_2(G) = 2(SH_2(G)-\frac12 v(N))$.
\end{example}

We can use Proposition~\ref{prop:sv-ev} to show that the Exchange Value satisfies two of the axioms satisfied by the Shapley Value, namely, linearity and symmetry.

\begin{proposition}
    The Exchange Value satisfies symmetry and linearity axioms.
\end{proposition}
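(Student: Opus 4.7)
The plan is to derive both axioms as direct corollaries of Proposition~\ref{prop:sv-ev}, which expresses the Exchange Value as an affine function of the Shapley Value with an offset that depends only on $v(N)$ and $n$. Since the Shapley Value is already known to satisfy symmetry and linearity, the work reduces to checking that the offset term behaves correctly under each axiom.

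For symmetry, I would fix two agents $i, j \in N$ satisfying $v(C\cup\{i\}) = v(C\cup\{j\})$ for all $C \subseteq N\setminus\{i, j\}$. The symmetry axiom for the Shapley Value gives $SV_i(G) = SV_j(G)$. Applying the identity
\begin{equation*}
EV_k(G) = \tfrac{n}{n-1}\bigl(SV_k(G) - \tfrac{1}{n} v(N)\bigr)
\end{equation*}
to both $k = i$ and $k = j$, and observing that the subtracted term $\tfrac{1}{n}v(N)$ is independent of the agent, yields $EV_i(G) = EV_j(G)$.

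For linearity, I would take two games $G_1 = (N, v_1)$ and $G_2 = (N, v_2)$ and the combined game $G = (N, v)$ with $v = v_1 + v_2$. Because $v(N) = v_1(N) + v_2(N)$ and $SV_i(G) = SV_i(G_1) + SV_i(G_2)$ by linearity of the Shapley Value, substituting into the identity above and splitting the affine expression termwise gives $EV_i(G) = EV_i(G_1) + EV_i(G_2)$. This is essentially a one-line calculation once the Shapley decomposition is in hand.

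I do not anticipate any real obstacle: the whole argument is a transfer of axioms across the affine map of Proposition~\ref{prop:sv-ev}. The only care required is to note that, in contrast to the two axioms being proved, the dummy and efficiency axioms do \emph{not} transfer cleanly because the $-\tfrac{1}{n}v(N)$ offset fails to vanish in those cases; this is consistent with the discussion in the paper that those two axioms must be restated (the dummy value becomes $-\tfrac{1}{n-1}v(N)$ and efficiency becomes $\sum_i EV_i(G) = 0$).
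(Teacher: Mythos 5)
Your proposal matches the paper's proof essentially verbatim: both derive symmetry and linearity by transferring the corresponding Shapley axioms through the affine identity $EV_i(G) = \tfrac{n}{n-1}\bigl(SV_i(G) - \tfrac{1}{n}\,v(N)\bigr)$ of Proposition~\ref{prop:sv-ev}, noting that the offset is agent-independent (symmetry) and splits additively over $v = v_1 + v_2$ (linearity). Your closing remark about the dummy and efficiency axioms also agrees with the paper's modified versions, so the argument is correct and complete.
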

\begin{proof}
    For the symmetry axiom, fix a characteristic function game $G=(N, v)$ and
    consider two agents $i, j\in N$ with $v(C\cup\{i\})=v(C\cup\{j\})$ for all $C\subseteq N\setminus\{i, j\}$.
    We have 
    $$
    EV_i(G)=\frac{n}{n-1}\left(SV_i(G)-\frac{1}{n} \cdot v(N)\right) = 
    \frac{n}{n-1}\left(SV_j(G)-\frac{1}{n} \cdot v(N)\right) = EV_j(G),
    $$
    where the first and the third equality follow by Proposition~\ref{prop:sv-ev}, 
    and the second equality follows because the Shapley Value satisfies symmetry.

    For the linearity axiom, consider a pair of games $G_1=(N, v_1)$ and $G_2=(N, v_2)$ with the same set of agents $N$ and the game $G=(N, v)$ whose characteristic funciton $v$ is given by $v(C)=v_1(C)+v_2(C)$ for all $C\subseteq N$. Fix an agent $i\in N$. We have 
    \begin{align*}
    EV_i(G) &= \frac{n}{n-1}\left(SV_i(G)-\frac{1}{n} \cdot(v_1(N)+v_2(N))\right) \\
    &= 
    \frac{n}{n-1}\left(SV_i(G_1)-\frac{1}{n}\cdot v_1(N)\right)+
    \frac{n}{n-1}\left(SV_i(G_2)-\frac{1}{n}\cdot v_2(N)\right) \\
    &= 
    EV_i(G_1)+EV_i(G_2).
    \end{align*}
    Again, the first and the third equality follow by Proposition~\ref{prop:sv-ev}, 
    and the second equality follows because the Shapley Value satisfies linearity.
\end{proof}

While the Exchange Value does not satisfy the dummy axiom or the efficiency axiom, 
it satisfies appropriately modified versions of these axioms.

\begin{proposition}
    For every characteristic function game $G$ it holds that 
    $\sum_{i\in N}EV_i(G)=0$. Moreover, if $i$ is a dummy agent, i.e., 
    $v(C\cup\{i\})=V(C)$ for all $C\subseteq N\setminus\{i\}$ then 
    $EV_i(G)=-\frac{v(N)}{n-1}$.
\end{proposition}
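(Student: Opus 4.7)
The plan is to leverage Proposition~\ref{prop:sv-ev} together with the two axioms of the Shapley Value that it references, namely efficiency and the dummy axiom. Since the hard work of relating $EV$ to $SV$ has already been done, both claims reduce to short algebraic manipulations; the main obstacle is really just being careful to distinguish $N$ (the grand coalition, whose value appears as a constant offset) from a generic coalition.

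For the first claim, I would sum the identity $EV_i(G) = \tfrac{n}{n-1}\bigl(SV_i(G) - \tfrac{1}{n}v(N)\bigr)$ over all $i\in N$. Pulling the constant $\tfrac{n}{n-1}$ outside and noting that there are $n$ terms, this yields
\begin{equation*}
\sum_{i\in N} EV_i(G) = \frac{n}{n-1}\left(\sum_{i\in N} SV_i(G) - v(N)\right).
\end{equation*}
By efficiency of the Shapley Value, $\sum_{i\in N} SV_i(G) = v(N)$, so the bracketed quantity vanishes and the sum is zero, as desired.

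For the second claim, fix a dummy agent $i$, i.e.\ $v(C\cup\{i\})=v(C)$ for all $C\subseteq N\setminus\{i\}$. Then the dummy axiom for the Shapley Value gives $SV_i(G)=0$. Substituting into Proposition~\ref{prop:sv-ev} yields
\begin{equation*}
EV_i(G) = \frac{n}{n-1}\left(0 - \frac{1}{n}v(N)\right) = -\frac{v(N)}{n-1},
\end{equation*}
which is exactly the stated formula. The only minor subtlety to flag would be that the dummy condition concerns all $C\subseteq N\setminus\{i\}$, including $C=\emptyset$, which together with the standing convention $v(\emptyset)=0$ from Definition~\ref{def:characteristic_game} is needed to apply the Shapley dummy axiom in the standard form; no independent verification is required since we are quoting an already-established axiom.

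Altogether, both parts are immediate corollaries of Proposition~\ref{prop:sv-ev} combined with the corresponding Shapley axioms, so there is no real obstacle beyond bookkeeping — the heavy lifting is encapsulated in the linear relationship between $EV$ and $SV$ already established earlier in the appendix.
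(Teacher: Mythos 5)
Your argument is correct and follows essentially the same route as the paper's proof: sum the identity from Proposition~\ref{prop:sv-ev} over all agents and invoke Shapley efficiency for the first claim, then substitute $SV_i(G)=0$ from the Shapley dummy axiom for the second. The bookkeeping matches the paper's computation exactly, so there is nothing to add.
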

\begin{proof}
    We have 
    \begin{align*}
    \sum_{i\in N}EV_i(G) 
    &= \sum_{i\in N}\frac{n}{n-1}\left(SV_i(G)-\frac{1}{n} \cdot v(N)\right) = 
    \sum_{i\in N}\frac{n}{n-1}SV_i(G) - \frac{n}{n-1}\cdot v(N) \\
    &= 
    \frac{n}{n-1}\cdot v(N) - \frac{n}{n-1}\cdot v(N) = 0, 
    \end{align*}
    where we use Proposition~\ref{prop:sv-ev} and the fact that the Shapley
    Value satisfies the efficiency axiom.

    Now, fix a dummy agent $i$.
    We have 
    $$
    EV_i(G) = \frac{n}{n-1}\left(SV_i(G)-\frac{1}{n} \cdot v(N)\right) = -\frac{1}{n-1}\cdot v(N);
    $$
    again, we use Proposition~\ref{prop:sv-ev} and the fact that the Shapley
    Value satisfies the dummy axiom.
\end{proof}
\subsection{Derivation of clustering objective stated in Eq.~\ref{eq:ev_clustering}}
\paragraph{Inessential games and \exv{}s.}
The assumption of an inessential game is commonly made to compute Shapley Values more efficiently\footnote{see, e.g., Covert, I. and Lee, S.I., 2020. Improving kernelshap: Practical shapley value estimation via linear regression. arXiv preprint arXiv:2012.01536.}.
In an inessential game, the value of a group is given by the sum of the individual contributions of its members, denoted as $v(C)= \sum_{i \in C} v_i$, where $v_i$ is an individual agent's unobserved contribution $v_i$. 
The \exv{} (see Definition~\ref{def:exchange_values}) of an individual agent $i$ in an inessential game is given as

\begin{equation*}
\begin{aligned}
\exv{}_i(G) &= v_i - \nicefrac{1}{|N| - 1} \cdot \sum_{j \in N \backslash \{i\}}v_j = \left(1+\nicefrac{1}{|N| - 1}\right) \cdot v_i - \nicefrac{1}{|N|-1} \cdot \sum_{j \in N} v_j,  
\end{aligned}
\end{equation*}

This expression represents the difference between the individual contribution of agent $i$, $v_i$, and the average individual contribution of all other agents. The second term is independent of $i$ and remains constant across all agents.

\paragraph{Derivation of equivalent clustering objective.}
We now consider the optimization problem defined by Equation~\ref{eq:ev_clustering}, which defines optimal cluster assignments $\mathbf{u}^*$ such that the variance in \exv{}s is maximised
$$
\mathbf{u}^* \in {\arg \max}_\mathbf{u} \textrm{Var}([\tilde{\exv{}}_0(\tilde{G}),\dots,\tilde{\exv{}}_{n-1}(\tilde{G})]).
$$

Further, the clustered value function is defined as 
$$
\tilde{v}(C)=\nicefrac{1}{\eta} \cdot {\textstyle \sum_{m=0}^{n-1} \sum_{\pi \in \Pi_{N}}} v(S_\pi(m)) \cdot \mathbbm{1} (\{u_j \ | \ j \in S_\pi(m)\} =  C),
$$
where the normalisation constant is defined as $\eta = \sum_{m=0}^{n-1} \sum_{\pi \in \Pi_{N}}  \mathbbm{1} (\{u_j \ | \ j \in S_\pi(m)\} = C)$. 
We denote by $k_i$ the individual contribution of the agent that represents the agents in cluster $i$. 
The value $k_i$ is defined as the average individual contribution of all agents assigned to the cluster, i.e. $k_i = \nicefrac{1}{\epsilon} \cdot {\textstyle \sum_{j\in N}} v_j \cdot \mathbbm{1}(\mathbf{u}(i)=\mathbf{u}(j))$. 
Here, the normalization constant is given as $\epsilon = \sum_{j\in N} \mathbbm{1}(\mathbf{u}(i)=\mathbf{u}(j))$.

Using the concept of the clustered value function $\tilde{v}$, we can express the \exv{} for all agents assigned cluster $i$ as
$$
\exv{}_i(\tilde{G}) = \left(1+\nicefrac{1}{|K|-1}\right) \cdot k_i - \nicefrac{1}{|K|-1} \cdot \sum_{j \in K} k_j.
$$

The second term, which is cluster-independent, can be omitted when computing the variance $\textrm{Var}([\exv{}_0(\tilde{G}),\dots,\exv{}_{n-1}(\tilde{G})])$, as the variance is agnostic to a shift in the data distribution. 
We will omit the scaling factor $\left(1+\nicefrac{1}{|K|-1}\right)$ from here onwards.

Let $n_j$ denote the number of agents assigned to cluster $j \in K$, with $\sum_{i=0}^{K-1} n_i = N$. By simplifying Equation~\ref{eq:ev_clustering}, we obtain:
$$
\textrm{Var}([\exv{}_0(\tilde{G}),\dots,\exv{}_{n-1}(\tilde{G})]) = \sum_{i=0}^{K-1} n_i \cdot \bigg(k_i-\nicefrac{\sum_{j=0}^{K-1} n_j \cdot k_j}{N}\bigg)^2.
$$

This allows us to express the objective stated in Equation~\ref{eq:ev_clustering} as
$$
\mathbf{u}^* \in {\arg \max}_\mathbf{u} \textrm{Var}([k_0, \dots, k_{n-1}]).%
$$
The objective stated in Equation~\ref{eq:ev_clustering} is therefore equivalent to assigning agents to clusters such that the variance in cluster centroids (centroids computed as the mean of the unobserved individual contributions $v_i$ of all agents assigned to a given cluster) is maximized.

\begin{figure}
    \centering
    \includegraphics[width=0.5\linewidth]{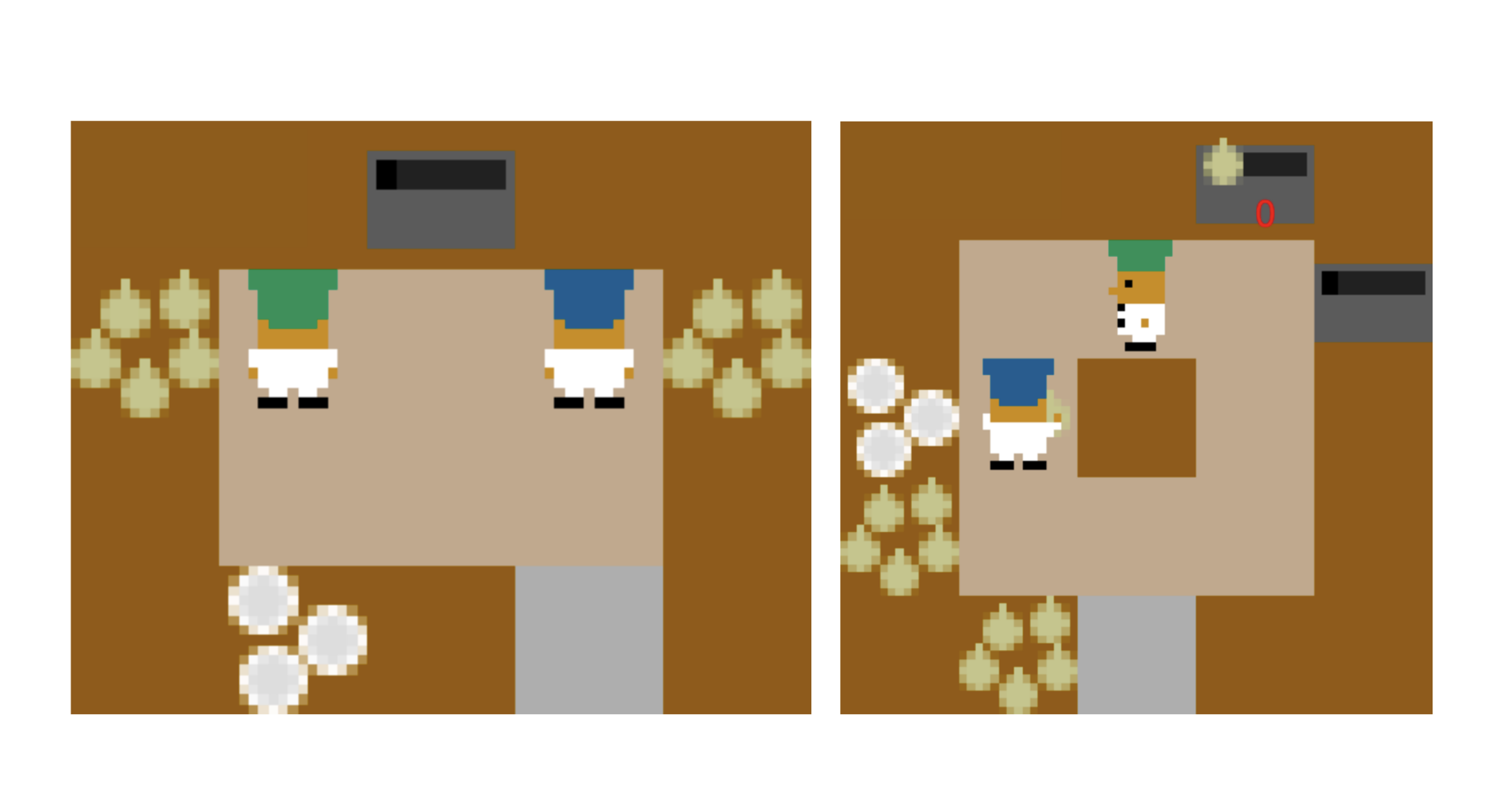}
    \caption{In the Overcooked environments Cramped Room (left) and Coordination Ring (right), agents must cooperate to cook and deliver as many soups as possible within a given time.}
    \label{fig:cramped_room}
\end{figure}

\begin{table}[h]
\centering
\caption{Dataset statistics in Overcooked.}
\label{tab:dataset-stats}
\resizebox{0.7\linewidth}{!}{\begin{tabular}{lcccc}
\hline
Imitation method & Cramped Room $D^{\lambda}$ & Coordination Ring $D^{\lambda}$ & Cramped Room $D^{adv}$ & Coordination Ring $D^{adv}$ \\
\hline
Minimum & 0 & 0 & 0 & 0 \\
Mean & $20.6 \pm 33.58$ & $12 \pm 19.39$ & $16.91 \pm 40.64$ & $3 \pm 11.15$ \\
Maximum & 150 & 80 & 160 & 80 \\
\hline
\end{tabular}}
\end{table}

\section{Overcooked experiments}\label{app:overcooked}
We generate the simulated datasets using the planning algorithm given in~\citet{carroll2019utility}\footnote{\url{https://github.com/HumanCompatibleAI/overcooked_ai}}.
To be able to simulate agents with different behaviors (from adversarial to optimal), we first introduce a latent trait parameter, $\lambda$, which determines the level of adversarial or optimal actions for a given agent. A value of $\lambda = 1$ represents a policy that always chooses the best action with certainty. As $\lambda$ decreases, agents are more likely to select non-optimal actions. For $\lambda < 0$, we invert the cost function to create agents with adversarial behavior. Notably, we assign a high cost (or low cost when inverted) to occupying the cell next to the counter in the Overcooked environment. Occupying the cell next to the counter enables adversarial agents to block other agents in the execution of their tasks.

For human gameplay datasets, we utilized the raw versions of the Overcooked datasets.\footnote{\url{https://github.com/HumanCompatibleAI/human_aware_rl/tree/master/human_aware_rl/data/human/anonymized}} These datasets were used as-is, without manual pre-filtering.

\begin{figure}[t]
\centering
\begin{minipage}[t]{0.4\linewidth}
\centering
\includegraphics[width=0.9\linewidth]{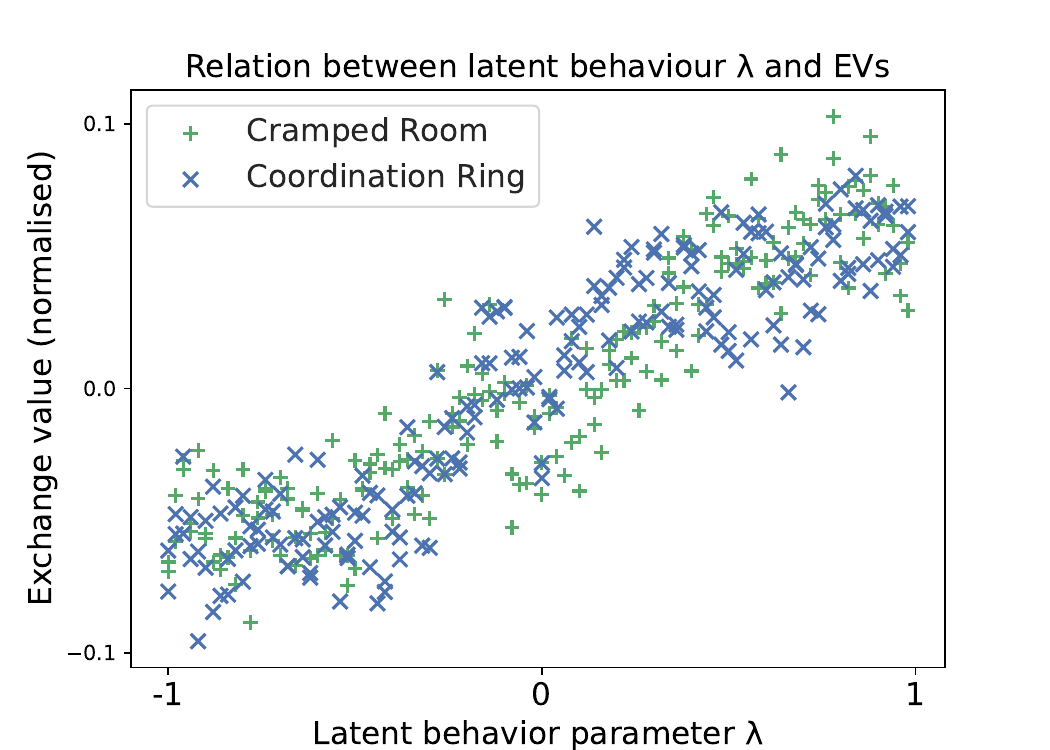}
    \caption{Relation between an agent's trait $\lambda$ and its EV in Overcooked.}
    \label{fig:lambda_corr}
\end{minipage}
\hfill
\begin{minipage}[c]{0.55\linewidth}
    \centering
    \includegraphics[width=0.9\linewidth]{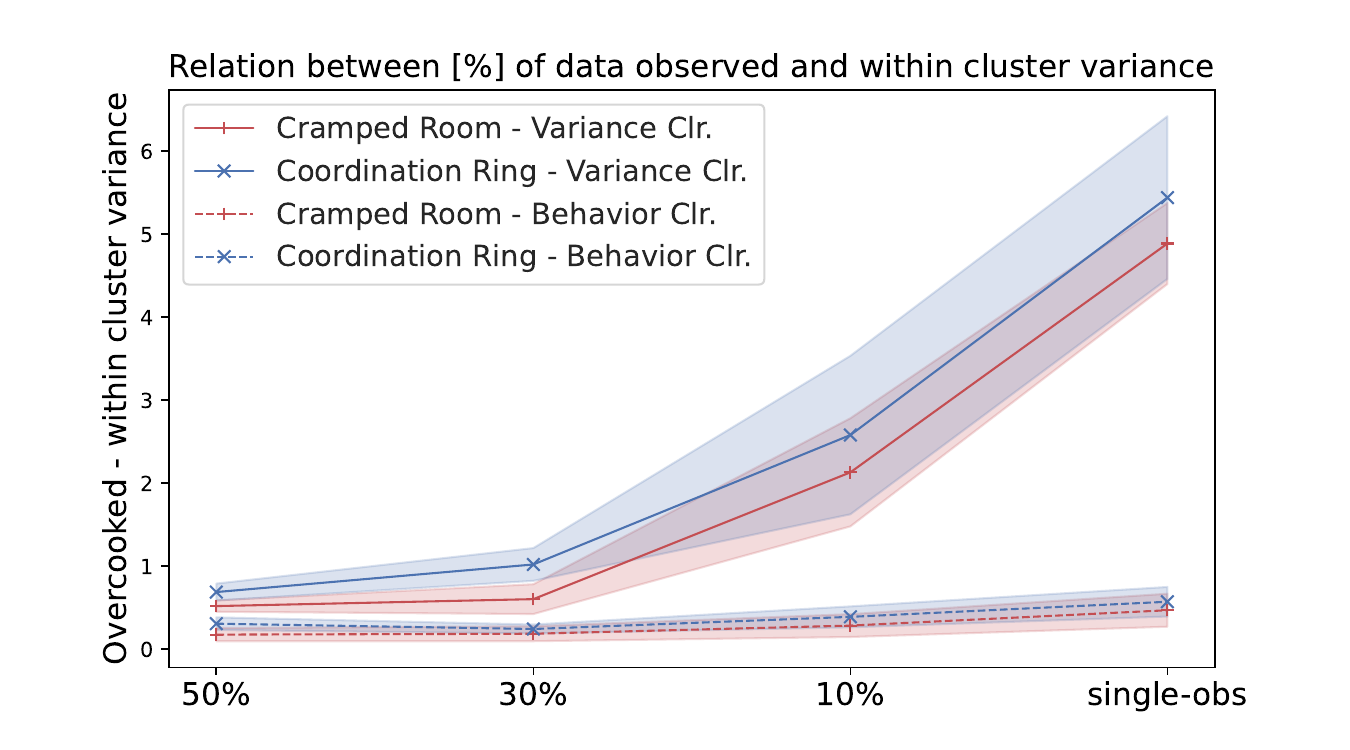}
    \caption{Within-cluster variance in relation to fraction of observations for simulated data in Cramped Room and Coordination Ring (Overcooked). Two clustering methods shown (Behavior clustering and Variance Clustering). In the case of random cluster assignments, the within-cluster variance is $5.11 \pm 0.11$, while under optimal cluster assignments, the variance is $0.156$. See section~\ref{app:clustering_ablation_details} for discussion.} 
    \label{fig:var_plot}
    \end{minipage}
\end{figure}

\paragraph{\exvs{}.}
To estimate agents' \exvs{} according to Section~\ref{sec:methods:est_by_sampling}, we used either the full set of all possible groups or a fraction of it (see Figure~\ref{fig:error_plot} for the relationship between dataset size and \exv{} estimation error). For each observed grouping, we conducted 10 rollouts in the environment and calculated the average score across these rollouts to account for stochasticity in the environment.

\paragraph{Imitation learning.}
For \evbc{}, BC, and group-BC, we used the implementation of Behavior Cloning in Overcooked as given by the authors of~\citep{carroll2019utility}\footnote{\url{https://github.com/HumanCompatibleAI/overcooked_ai/tree/master/src/human_aware_rl/imitation}}.
We implement the offline multi-agent reinforcement learning method OMAR~\citep{pan2022plan} using the author's implementation.\footnote{\url{https://github.com/ling-pan/OMAR}} For the OMAR baseline, we set the reward at the last timestep to the \dvf{}'s score for a given trajectory, as our work assumes that no per-step reward signal is given, in contrast to the standard offline-RL framework. 
We conducted a hyperparameter sweep for the following parameters: learning rate with options $\{0.01, 0.001, 0.0001\}$, Omar-coe with options $\{0.1, 1, 10\}$, Omar-iters with options $\{1, 3, 10\}$, and Omar-sigma with options $\{1, 2, 3\}$. The best-performing parameters were selected based on the evaluation results.

\subsection{Clustering of agents in Overcooked}\label{app:clustering_ablation}
\paragraph{Behavior clustering.}\label{app:clustering_degeneracy}
The behavior clustering process in the Overcooked environment involves the following steps. Initially, we identify the 200 states that are most frequently visited by all agents in the given set of observations. 
As the action space in Overcooked is relatively small ($\leq$ 7 actions), we calculate the empirical action distribution for each state for every agent. These 200 action distributions are then concatenated to form a behavior embedding for each agent. To reduce the dimensionality of the embedding, we apply Principal Component Analysis (PCA), transforming the initial embedding space into three dimensions. Subsequently, we employ the k-means clustering algorithm to assign agents to behavior clusters. 
The number of clusters (3 for Overcooked) is determined using the ELBOW method \citep{thorndike1953belongs}, while linear kernels are utilized for both PCA and k-means. 
It is noteworthy that the results are found to be relatively insensitive to the parameters used in the dimensionality reduction and clustering steps, thus standard implementations are employed for both methods~\citep{scikit-learn}. 
Importantly, this clustering procedure focuses exclusively on the observed behavior of agents, specifically the actions taken in specific states, and is independent of the scores assigned to trajectories by the \dvf{}.

\paragraph{EV-Clustering.}\label{app:clustering_degeneracy}
In contrast to behavior clustering, EV-Clustering (see Section~\ref{sec:methods:est_from_single}) focuses solely on the scores assigned to trajectories by the \dvf{} and disregards agent behavior. The objective of variance clustering is to maximize the variance in assigned \exvs{}, as stated in Equation~\ref{eq:ev_clustering}. 
To optimize this objective, we utilize the SLSQP non-linear constrained optimization introduced by~\citet{kraft1988software}.

We use soft cluster assignments and enforce constraints to ensure that the total probability is equal to one for each agent. 
The solver is initialized with a uniform distribution and runs until convergence or for a maximum of 100 steps. 
Given that the optimization problem may have local minima, we perform 500 random initializations and optimizations, selecting the solution with the lowest loss (i.e. the highest variance in assigned \exvs{}).

\paragraph{Combining Behavior Clustering and EV Clustering.}
As described in Sections~\ref{sec:meth:degeneracy} and~\ref{sec:exp:reduced}, behavior clustering (which utilizes behavior information but disregards \dvf{} scores) and variance clustering (which utilizes \dvf{} scores but disregards behavior information) are combined to estimate \exvs{} for degenerate datasets. 
We initialize the SLSQP solver with the cluster assignments obtained from behavior clustering and introduce a small loss term in the objective function of Equation~\ref{eq:ev_clustering}. 
This additional loss term, weighted by 0.1 (selected in a small sensitivity analysis), penalizes deviations from the behavior clusters. 
Similar to before, we perform 500 iterations while introducing a small amount of noise to the initial cluster assignments at each step. 
The solution with the highest variance in assigned \exvs{} is then selected.

\paragraph{Ablation study.}\label{app:clustering_ablation_details}
We present an ablation study to examine the impact of different components in the clustering approach discussed in Section~\ref{sec:exp:reduced}. 
We proposed two sequential clustering methods: behavior clustering and variance clustering. 
This ablation study investigates the performance of both clustering steps when performed independently, also under the consideration of the fraction of the data that is observed. 
We assess performance as the within-cluster variance in the unobserved agent-specific latent trait variable $\lambda$, where lower within-cluster variance indicates higher performance. 
It is important to note that $\lambda$ is solely used for evaluating the clustering steps and not utilized during the clustering process. 
The results of the ablation study are depicted in Figure~\ref{fig:var_plot}. 

We first discuss EV-Clustering. EV-Clustering as introduced in Seciton~\ref{eq:ev_clustering} generally leads to a significant decrease in within-cluster variance in the unobserved variable $\lambda$.
More specifically, the proposed variance clustering approach (when 50\% of data is observed), results in a $\sim 89\%$ reduction of the within-cluster variance in $\lambda$, which validates the approach of clustering agents by their unobserved individual contributions by maximizing the variance in estimated \exvs{}.
However, we observe in Figure~\ref{fig:var_plot} that as the fraction of observed data decreases, the within-cluster variance increases, indicating a decrease in the quality of clustering. 
The highest within-cluster variance is observed when using only a single observation ('single-obs'), which corresponds to a fully-anonymized dataset. 
This finding is consistent with the fact that a fully-anonymized dataset presents a degenerate credit assignment problem, as discussed in Section~\ref{sec:meth:degeneracy}.

We now discuss behavior clustering. 
Figure~\ref{fig:var_plot} shows that behavior clustering generally results in a very low within-cluster variance. 
However, it is important to note that these results may not directly translate to real-world data, as the ablation study uses simulated trajectories. 
Note that such an ablation study cannot be conducted for the given real-world human datasets, as these are fully anonymized.

\begin{figure}
    \centering
    \includegraphics[width=0.65\linewidth]{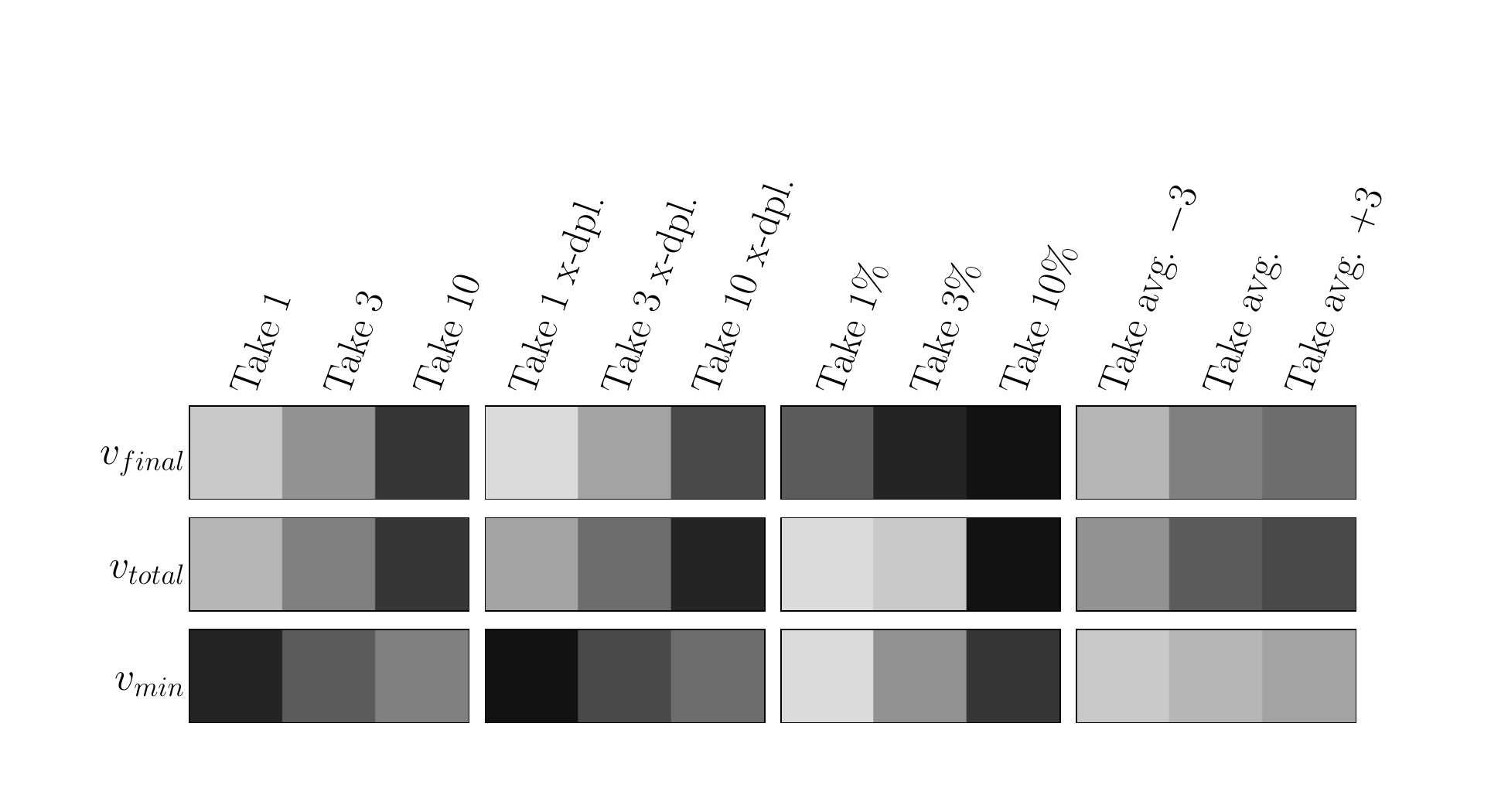}
     \caption{Colour-coded ordering of \exv{}s for agents with varying behaviors in Tragedy of the Commons. The brighter, the higher an agent's contribution to a given value function.}
    \label{fig:toc_colormap}
\end{figure}

\section{Tragedy of the Commons experiments}\label{app:toc}

\paragraph{Clustering.} We model ToC as a multi-agent environment where agents consume from a common pool of resources $x_t$, which grows at a fixed rate $g=25\%$ at each time step $t$: $x_{t+1} = \max\left((1 + g)\cdot x_t - {\textstyle \sum\nolimits_i} c_{ti}, \;0\right)$,
with $c_{ti}$ as the consumption of the $i$th agent at time $t$ and $x_0=200$ as the starting pool. 
Hence, if all resources are consumed, none can regrow and no agents can consume more resources. 
The Tragedy of the Commons (ToC) environment features 4 different behavior patterns: \emph{Take-X} consumes X units at every timestep, \emph{Take-X-x-dpl} consumes X units if this does not deplete the pool of resources, \emph{Take X\%} consumes X\% of the available resources, and \emph{TakeAvg} consumes the average of the resources consumed by the other agents at the previous timestep (0 in the first timestep). 
For the small-scale experiment of 12 agents, we consider three agents for each pattern, with X values selected from the set ${1, 3, 10}$.
For the large-scale experiment of 120 agents, we simply replicate each agent configuration 10 times.
We simulate both experiments for groups of size 3 and 10, respectively.
We generate a simulated dataset using agents with four different behavior patterns.
We first collect a dataset of observations for a small-scale experiment of 12 agents and simulate ToC for groups of three agents for 50 time steps (we later consider a group of 120 agents).

Due to the continuous nature of the state and action spaces in ToC, we first discretize both and then apply the same clustering methods used in the Overcooked scenario. 
We proceed by computing \exvs{} for all agents as done in Overcooked (see Figure~\ref{fig:error_plot} for results).
We implement imitation policies by replicating the averaged action distributions in the discretized states. 

\subsection{Computational demand.}\label{app:compdemand}
We used an Intel(R) Xeon(R) Silver 4116 CPU and an NVIDIA GeForce GTX 1080 Ti (only for training BC, \evbc{}, group-BC, and OMAR policies).
In Overcooked, generating a dataset took a maximum of three hours, and estimating \exv{}s from a given dataset takes a few seconds. 
Behavior clustering consumes a couple of minutes, while Variance clustering takes up to two hours per configuration (note that it is run 500 times).
Training of the BC, group-BC, and \evbc{} policies took no more than 30 minutes (using a GPU), while the OMAR baseline was trained for up to 2 hours.
In Tragedy of Commons, each rollout only consumes a couple of seconds. Clustering times were comparable to those in Overcooked. Computing imitation policies is similarly only a matter of a few minutes.

\section{Starcraft experiments}\label{app:starcraft}

\paragraph{Dataset generation.}
For each environment configuration, we first train agents with the ground truth reward function for different seeds. 
We then extract agents at three different timesteps (0\%, 50\%, and 100\%) of the training.
We randomly sample groups of the extracted agents to generate a large set of trajectory rollouts and record the average score achieved by a group, which serves as the observed DVF score. We anonymize the dataset by assigning one-time-use IDs to all agents.

\paragraph{Clustering of agents and EV computation.} 
We first extract high-dimensional features from agent trajectories using Term Frequency-Inverse Document Frequency (TF-IDF~\citep{sparck1972statistical}) and compute agent clusters.
We compute a large number of possible cluster assignments using different hyperparameters for TF-IDF and different hyperparameters for spectral clustering. 
In accordance with the objective of EV clustering, we then choose the cluster assignment that results in the largest variance in EVs across all agents. 

For each individual agent, we extract the action sequence (up to a cutoff of either 1000 or 50000 environment steps). 
We then convert the action sequence into a string to apply TD-IDF. 
This can be thought of as representing each action as an individual word.
This results in a corpus that contains one document per agent.

For the transformation of the corpus into a feature space, we apply the TF-IDF vectorization, parameterized by:
\begin{itemize}
    \item Min\_df: Minimum document frequency set at $0.05$, ensuring inclusion of terms present in at least $5\%$ of the documents.
    \item Max\_df: Maximum document frequencies tested were $0.3$ and $0.9$, to filter out terms excessively common across documents.
    \item Max\_features: The upper limit on the number of features considered were set to $100,000$ or $1,000,000$.
    \item Ngram\_range: We explored n-gram ranges of $(1, 3)$, $(1, 5)$, and $(1, 10)$, to capture varying lengths of term dependencies.
\end{itemize}

Post TF-IDF vectorization, we use spectral clustering to achieve cluster assignments. 
We compute clustered EVs for each possible cluster assignment and choose the cluster assignment with the highest variance in EVs across all agents.
The computed EVs for the highest-variance cluster assignment are then used as the estimated EVs for all agents.

\paragraph{Imitation with EV2BC.}
We apply EV2Bc by filtering for agents with high estimated EVs. 
We found that pre-filtering for trajectories with high collective scores significantly improves performance. 
In other words, we filter for agents with high estimated EVs in trajectories with high collective scores. 
We assume that this is the case as some agents might `block' any progress in trajectories with low collective scores.

\end{document}